\documentclass{article}

\usepackage[preprint]{neurips_2026}

\usepackage[utf8]{inputenc} 
\usepackage[T1]{fontenc}    
\usepackage{hyperref}       
\usepackage{url}            
\usepackage{booktabs}       
\usepackage{amsfonts}       
\usepackage{nicefrac}       
\usepackage{microtype}      
\usepackage{xcolor}         
\usepackage{amsmath}

\usepackage{pgfplots}
\usepackage{pgfplotstable}

\newcommand{\qset}{\mathcal{Q}}

\newcommand{\aset}{\mathcal{A}}
\newcommand{\sset}{\mathcal{S}}

\newcommand{\polset}{\mathcal{P}}

\newcommand{\region}{\graval}

\newcommand{\pol}{\pi}

\newcommand{\qval}{q}

\newcommand{\qopt}{\qval_{\best{\pol}}}

\newcommand{\graval}{Q}

\DeclareMathOperator*{\E}{\mathbb{E}}

\newcommand{\worst}[1]{\overline{#1}} 
\newcommand{\best}[1]{{#1}_*}

\newcommand{\kpz}{{k}}

\usepackage{amsthm}

\usepackage{amsmath}
\usepackage{amssymb}
\usepackage{amsfonts}
\newtheorem{theorem}{Theorem}

\newtheorem{lemma}{Lemma}

\usepackage{pdfpages}

\usepackage{enumitem}

\usepackage{mathtools}
\usepackage{bm}

\usepackage{multirow}

\definecolor{cambridgeblue}{RGB}{163, 193, 173}
\definecolor{CambridgeSuperLightBlue}{RGB}{204, 227, 245}
\definecolor{CambridgeSuperLightOrange}{RGB}{252, 229, 205}
\definecolor{CambridgeSuperLightPurple}{RGB}{142,37,141}
\definecolor{CambridgeSuperLightGreen}{RGB}{217, 234, 211}
\definecolor{CambridgeDarkLightBlue}{RGB}{84,138,182}
\definecolor{CambridgeLightBlue}{RGB}{106,173,228}
\definecolor{CambridgeLightOrange}{RGB}{239,189,71}
\definecolor{CambridgeLightGreen}{RGB}{168,180,0}
\definecolor{CambridgeLightPurple}{RGB}{181,147,155}
\definecolor{CambridgeLightTeal}{RGB}{163,193,173}
\definecolor{CambridgeCoreBlue}{RGB}{0,115,207}
\definecolor{CambridgeCoreOrange}{RGB}{227,114,34}
\definecolor{CambridgeCoreBurgundy}{RGB}{144,27,59}
\definecolor{CambridgeCoreGreen}{RGB}{88,166,24}
\definecolor{CambridgeCorePurple}{RGB}{142,37,141}
\definecolor{CambridgeCoreTeal}{RGB}{0,179,190}
\definecolor{CambridgeDarkBlue}{RGB}{0,62,114}
\definecolor{CambridgeDarkOrange}{RGB}{200,78,0}
\definecolor{CambridgeDarkGreen}{RGB}{67,81,37}
\definecolor{CambridgeDarkPurple}{RGB}{65,45,93}
\definecolor{CambridgeDarkTeal}{RGB}{21,101,112}
\definecolor{CambridgeWhitePrint}{RGB}{255,255,255}
\definecolor{CambridgeBlackPrint}{RGB}{0,0,0}
\definecolor{CambridgeYellow}{RGB}{220,20,6}

\definecolor{mygrey}{RGB}{170, 170, 170}
\definecolor{mylightgrey}{RGB}{230, 230, 230}

\usepackage{tikz}
\usetikzlibrary{automata, positioning, arrows.meta, decorations.markings, calc, fit}
\tikzset{
    middle arrow/.style={
        decoration={markings,
            mark=at position #1 with {\arrow{>}}}, 
        postaction={decorate},
        thick
    }
}

\usepackage{subcaption}
\usepackage{caption}

\usepackage{pgfplots}
\usepgfplotslibrary{groupplots}
\pgfplotsset{compat=1.18}

\usepackage{float}

\newcommand{\mcopi}{MCES}

\title{Exploring Starts Are Not Enough: \\ Counterexamples and a Fix for \\ Monte Carlo Exploring Starts}

\author{%
  Octave Oliviers \\
  Department of Engineering\\
  University of Cambridge\\
  Cambridge, UK \\
  \texttt{ofao2@cam.ac.uk} \\
   \And
  Glenn Vinnicombe \\
    Department of Engineering\\
  University of Cambridge\\
  Cambridge, UK \\
  \texttt{gv103@cam.ac.uk}
}

\begin{document}

\maketitle

\begin{abstract}
The asymptotic behaviour of Monte Carlo Exploring Starts (MCES) is a long-standing open question in reinforcement learning, even in the tabular setting.
We investigated the convergence properties of tabular MCES by constructing examples in which the algorithm converges to suboptimal solutions.
This paper presents new counterexamples for both initial-visit and first-visit MCES and gives a convergence-restoring modification for the initial-visit case.
We show that stable suboptimal solutions may exist for initial-visit MCES with sample-average updates even when greedy actions are updated more often than non-greedy actions on average.
However, by scaling learning rates inversely to update frequencies on a state-by-state basis, convergence to optimality is guaranteed.
Unlike previous uniformisation methods, this modification is applicable to large-scale problems that require approximating the estimated value function.
We then extend the example to show that sample-average first-visit MCES may also converge to suboptimal solutions.
This largely settles a fundamental open problem and shows that exploring starts alone do not guarantee convergence to optimality.
More broadly, these results highlight that convergence depends critically on the relative size and frequency of updates applied to different actions,
making the choice of learning rates and the balance between exploration and exploitation central to the analysis of MCES and the implementation of scalable Monte Carlo control methods.
\end{abstract}


\section{Introduction}

Reinforcement learning aims to train an agent to behave optimally in a certain environment.
In this paper, we model this interaction as a finite Markov decision process (MDP) evolving over discrete steps.
At time \(t\), the agent observes a state \(s_t \in \mathcal S\) and selects an action \(a_t \in \mathcal A(s_t)\) according to a policy \(\pi(a_t | s_t)\).
The environment then produces a reward \(r_t\) and transitions to a new state \(s_{t+1}\) according to its dynamics $p(s_{t+1}, r_t | s_t, a_t)$.

For a policy \(\pi\), the action-value function $q_\pi$ assigns to each pair $(s,a)$ the expected discounted return obtained by taking action \(a\) in state \(s\), and then following policy \(\pi\):
\[
q_\pi(s,a)
=
\mathbb E_{\pi} \left[
\sum_{t=0}^{\infty} \gamma^t r_t
\;\middle|\;
s_0 = s,\
a_0 = a
\right].
\]
Under standard finite-MDP assumptions,
there exists an optimal policy $\pi_*$ that satisfies $q_{\pi_*}(s,a) = \max_\pi q_\pi(s,a)$ for all $(s,a)$.
Moreover, $\pi_*$ may be chosen to be deterministic and greedy with respect to $q_{\pi_*}$, meaning
$\pi_*(s) \in \arg\max_{a \in \mathcal A(s)} q_{\pi_*}(s,a)$ for all $s$.


Monte Carlo Exploring Starts (MCES) maintains an action-value estimate $q_k$
and a policy $\pi_k$ greedy with respect to $q_k$.
At iteration $k$, the algorithm samples an initial state-action pair according to a distribution $\sigma_k$
and simulates an episode starting from that pair:
\[
\mathcal E_k
:=
\bigg\{
s_0,a_0,r_0,s_1,a_1,r_1,\ldots : 
(s_0,a_0) \sim \sigma_k, \
(s_{t+1},r_t) \sim p(\cdot, \cdot| s_t, a_t), \
a_t = \pi_k(s_t), \
\forall \, t \ge 1
\bigg\} .
\]
It then updates the estimate $q_k$ at pairs $(s,a)$ appearing in $\mathcal E_k$ as
\begin{align}
\label{eq: mcopi def}
q_{k+1}(s,a) = q_k(s,a) + \eta_k(s,a) \big( \tilde q_k(s,a) - q_k(s,a) \big) ,
\end{align}
while \(q_{k+1}(s,a) = q_k(s,a)\) for non-updated pairs.
Here,
$\eta_k(s,a)$ is a learning rate satisfying the Robbins--Monro conditions $\sum_{k} \eta_k(s,a) = \infty$ and $\sum_{k} \eta_k^2(s,a) < \infty$ for each $(s,a)$,
and
$\tilde q_k(s,a)$ is the observed return from that pair onwards:
\begin{align}
\label{eq: episode return}
\tilde q_k(s_t,a_t) = \sum_{i \geq 0} \gamma^{i} r_{t+i} 
\qquad
\forall \, t \ge 0 .
\end{align}
%
%
%
%
%
We consider two update rules.
The \emph{initial-visit} (IV) method only updates $q_k$ at the initial pair $(s_0,a_0)$.
The \emph{first-visit} (FV) method updates each state-action pair in the episode, using the return from its first occurrence in the episode.
We also consider different choices of learning rates.
The \emph{sample-average} (SA) rule sets $\eta_k(s,a) = 1/n_k(s,a)$, where $n_k(s,a)$ is the number of times $(s,a)$ has been updated up to and including time $k$. Under this rule, $q_{k}(s,a)$ is the sample average of the observed returns for $(s,a)$ up to time $k$.
Alternatively, the learning rates $\{\eta_k\}_{k \ge 0}$ can be independent of $(s,a)$ and just form a scalar sequence.

To ensure sufficient exploration,
the \emph{exploring starts} 
condition requires that $\sigma_k(s,a) > 0$ for all $k$ and all $(s,a)$.
In order to ensure all state-action pairs are visited infinitely often, we introduce a stricter condition:
there exists a constant $\sigma_{\min}>0$ such that $\sigma_k(s,a) \ge \sigma_{\min}$ for all $k$ and all $(s,a)$.

For both IV and FV, the observed return $\tilde q_k$ is an unbiased estimate of $q_{\pi_k}$~\cite{Singh1996ReinforcementTraces}.
So, defining the zero-mean perturbation $v_k(s,a) := \tilde q_k(s,a) - q_{\pi_k}(s,a)$,
equation \eqref{eq: mcopi def} can be written as
\begin{align}
\label{eq: mcopi def noisy}
q_{k+1}(s,a) = q_k(s,a) + \eta_k(s,a) \big( q_{\pi_k}(s,a) - q_k(s,a) + v_k(s,a) \big) .
\end{align}
MCES is thus a stochastic approximation to the policy evaluation map associated with the current greedy policy.
The dynamics are fixed as long as $q_k$ remains in a region where the same policy is greedy. For a policy $\pi$, define this region by
\begin{align}
\label{eq: greedy action values}
    Q(\pi) \coloneqq \big\{ q  : q(s, \pi(s)) = \max_{a \in \mathcal A(s)} q(s,a), \ \forall \, s \in \mathcal S \big\} .
\end{align}
%
%
%
Discussing the FV/SA variant of MCES,
Sutton
writes:
``It is hard to imagine any RL method simpler or
more likely to converge than this $\ldots$
While this simplest case remains open we
are unlikely to make progress on any control method for $\lambda > 0$.''~\cite[p.~13]{Sutton1999OpenLearning}
(Here, $\lambda$ refers to the parameter in Q($\lambda$), TD($\lambda$) etc.~\cite{Sutton1988LearningDifferences}.)
Sutton and Barto later reinforce the importance of this problem by labelling it as ``one of the most fundamental open theoretical questions in reinforcement learning.''~\cite[p.~99]{Sutton2018ReinforcementIntroduction}




\subsection{Previous work}

The Robbins--Monro conditions alone do not prevent IV MCES from converging to suboptimal solutions~\cite{Bertsekas1996Neuro-DynamicProgramming, Wang2022OnLearning}.
In these counterexamples, the sampling distribution is strongly biased towards non-greedy actions (i.e. $\sigma_k(s,a_1)>\sigma_k(s,a_1)$ when $q_k(s,a_1)<q_k(s,a_2)$), as described in \autoref{app: non greedy}.
In contrast, with uniform sampling distributions over all state-action pairs and scalar learning rates, IV MCES converges almost surely to optimality~\cite{Tsitsiklis2002OnIteration, Chen2018OnProblem, Liu2021OnStarts}.
For large problems, however, sampling uniformly over the state space is not feasible.

For completeness, we note that MCES also converges to optimality when the MDP exhibits a feedforward structure \citep{Wang2022OnLearning, Lubars2021OptimisticStructure}, and that 
more complex multi-step algorithms manage to recover contraction arguments 
by combining several multi-step approximations~\citep{Munos2016SafeLearning},
or using a look-ahead mechanism~\citep{Winnicki2023OnEvaluation}.
We focus, however, on the behaviour of tabular MCES for general MDPs as even this simple case remains an open question.


\subsection{Contributions}

We make three contributions.
First, we construct an MDP in which IV/SA MCES does not converge to $q_{\pi_*}$ even when sampling distributions are biased towards greedy actions.
Second, we prove that IV MCES converges to $q_{\pi_*}$ when scaling learning rates inversely to update frequencies on a state-by-state basis.
This modification works in large-scale environments as it does not require uniform sampling over the state space.
Unlike earlier work, the proof derives convergence by analysing the evolution of the policy $\pi_k$ instead of the estimate $q_k$.
Third, we show that FV/SA MCES can converge to suboptimal solutions when noise is averaged out.
Thus any convergence proof for FV/SA MCES would need to show that the stochastic sampled paths always escape the mean-field attractors.
Our simulations suggest that this does not happen.
For all simulation horizons we could test, we observed fully sampled runs getting trapped in the same suboptimal basin.
These results provide strong evidence against the global convergence of
FV/SA MCES in general MDPs.

\section{Counterexample for initial-visit MCES with sample average updates}
\label{sec: counterexample iv mces}

\label{sec: 3 state sampling strat}
Consider the MDP in \autoref{fig: explain loop on-policy bias MDP} with $\gamma = 0.9$.
Each state has two actions: "move" yields a reward of one while "stay" yields no reward. The policy $\pi_*$ chooses "move" everywhere, denoted by $a_*$, while $\bar \pi$ chooses "stay", denoted by $\bar a$.

\input{figures/explain-loop}


The following sampling strategy forces IV MCES to cycle indefinitely between the suboptimal policies $\pol_1$, $\pol_2$, $\pol_3$, $\pol_4$, $\pol_5$ and $\pol_6$ for every initial condition in the region $\qset_0$, which 
is the intersection between the regions of these six policies and the shaded region in \autoref{fig: explain loop on-policy bias MDP}:
\begin{enumerate}
    \item  \label{pg: sampling strat 3 states}
    When $q_k \in \region(\pol_1)$, start each episode in $(s_2, \best{a})$ or $(s_3, \best{a})$ with a probability of 0.4, or in $(s_2, \worst{a})$ with a probability of 0.2.
    \mcopi{} is then expected to cross the boundary in state $s_3$ and transition to policy $\pol_2$.

    \item
    When $q_k \in \region(\pol_2)$, start each episode in $(s_1, \worst{a})$. 
    MCES will cross the boundary in $s_1$ and transition to $\pol_3$.

    \item
    When $q_k \in \region(\pol_3)$, start each episode in $(s_1, \best{a})$ or $(s_2, \best{a})$ with a probability of 0.4, or in $(s_1, \worst{a})$ with a probability of 0.2.
    \mcopi{} is then expected to cross the boundary in $s_2$ and transition to $\pol_4$.

    \item
    When $q_\kpz \in \region(\pol_4)$, start each episode in $(s_3, \worst{a})$. 
    MCES will cross the boundary in $s_3$ and transition to $\pol_5$.

    \item
    When $q_\kpz \in \region(\pol_5)$, start each episode in $(s_1, \best{a})$ or $(s_3, \best{a})$ with a probability of 0.4, or in $(s_3, \worst{a})$ with a probability of 0.2.
    \mcopi{} is then expected to cross the boundary in $s_1$ and transition to $\pol_6$.

    \item
    When $q_\kpz \in \region(\pol_6)$, start each episode in $(s_2, \worst{a})$. 
    \mcopi{} will cross the boundary in $s_2$ and transition to $\pol_1$.
\end{enumerate}
\label{enum: sampling greedy initial visit}

The cycle is robust to noise caused by sampling the initial state-action pairs in steps 1, 3 and 5.
For example, if $(s_2, \worst{a})$ is updated several times in a row during step 1, the solution might switch to policy $\pol_6$ instead of $\pol_2$.
Nevertheless, the update distribution under $\pol_6$ directly corrects this by updating $(s_2, \worst{a})$ and driving the solution back to $\pol_1$.
Thus, the sampling strategy guarantees that, for every initial condition in $\mathcal Q_0$, the greedy policy $\pi_k$ contains at least one suboptimal action but never all three.
As a result, 
solutions cycle indefinitely with potential pairwise repeats between $\pi_1$ and $\pi_6$, $\pi_3$ and $\pi_2$, or $\pi_4$ and $\pi_5$ but without ever visiting the policies $\best{\pol}$ or $\worst{\pol}$.
This behaviour does not rely on the exact probabilities above.
It persists when increasing the sampling bias towards greedy actions,
for instance when replacing the 2-to-1 sampling ratio of $(s_2, \best{a})$ to $(s_2, \worst{a})$ during step 1 with an epsilon-greedy strategy for small epsilon.

We incorporate exploring starts 
by initialising each episode using the sampling strategy above with a probability of 0.9 
or by uniformly sampling a state-action pair with a probability of 0.1.
Since the wrong state-action pairs are now sampled at the wrong time infinitely often,
solutions can visit the policies $\best{\pol}$ or $\worst{\pol}$.
When they do, we sample the initial state-action pair uniformly.
Additionally, we introduce stochastic transitions by terminating each episode with a probability of 0.1 after each action.
The value $q_{\pol_k}(s,a)$ is then always over- or underestimated, which yields steps that potentially drive solutions out of $\qset_0$.

\input{figures/fig_es3}

Despite the variability of initial state-action pairs and the noisy action-value estimates, IV MCES still converges to a suboptimal solution when using SA learning rates.
For instance,
$99.6\%$ of solutions converge to a point where $q(s, a_*) = q(s, \bar a) \approx 1.615$ for all states
when starting in
$q_0(s_1,a_*) = q_0(s_2,a_*) = q_0(s_3,\bar a) = 2$
and
$q_0(s_1,\bar a) = q_0(s_2, \bar a) = q_0(s_3,a_*) = 1$
with $n_1(s,a)=1$ for all state-action pairs (\autoref{fig: 3s IV ES ST}).
The remaining $0.4\%$ escape to $q_{\pi_*}$.




The reason solutions converge to the boundary despite frequently visiting the optimal policy $\pi_*$ is as follows.
Consider a solution in the region $Q(\pi_*)$ near the boundary point $1.615$.
So $q_k(s, a_*) > q_k(s, \bar a)$ 
for every state,
and $q_k(s,a) < \qopt(s,a)$ for every state-action pair.
There are two possible updates.
Updating $(s, a_*)$ moves $q_k$ deeper into $Q(\pi_*)$ on average as $q_k(s, a_*)$ increases compared to $q_k(s, \bar a)$.
In contrast, updating $(s, \bar a)$ guides the solution towards the boundary of $Q(\pi_*)$ as $q_k(s, \bar a)$ catches up to $q_k(s, a_*)$. If the learning rate is large enough, 
$\bar a$ may even become greedy in $s$.
Thus, solutions consistently leave the optimal region only if 
updates to $(s, \bar a)$ receive larger learning rates or are more frequent on average than updates to $(s, a_*)$.
Precisely this behaviour is induced by the sampling strategy, the initial-visit updates and the sample-average learning rates.
Specifically, the sampling strategy 
combined with initial-visit updates implies that, for every state, $a_*$ is updated more frequently than $\bar a$ when solutions cycle between the policies $\pol_1$, $\pol_2$, $\pol_3$, $\pol_4$, $\pol_5$ and $\pol_6$.
Therefore, $1/n_k(s, \best{a})$ decreases faster than $1/n_k(s, \worst{a})$ for each state.
As a result, when noise pushes solutions into the optimal region $Q(\pi_*)$, all actions are updated equally frequently but updates to $(s, \worst{a})$ are larger than updates to $(s, \best{a})$,
which drives solutions out of $Q(\pi_*)$.
In other words, the sampling strategy, the initial-visit updates and the learning rates effectively bias updates towards suboptimal actions when $\pol_k = \best{\pol}$ even though greedy actions are updated more frequently. 
As discussed above, this 
allows solutions to consistently jump in and out of the optimal region, creating a stable suboptimal boundary region when the learning rates decrease.
%

Overall,
this example shows that IV/SA MCES may converge to suboptimal solutions even when, in each state, the greedy action is updated at least as frequently as any other action.
The asymptotic behaviour is determined by the interplay
between learning rates and within-state update frequencies, rather than either factor alone.
This suggests a remedy: scale learning rates to offset non-uniform update
frequencies in each state and recover convergence to optimality.

\section{Fix for initial-visit MCES}

Starting from an initial $q_0$,
IV MCES repeats the following steps for $k=0,1,\ldots,\infty$:
\begin{enumerate}
    \item Select a deterministic policy $\pi_k(s) \in \arg\max_a q_k(s,a)$ for all $s$.

    \item Sample an initial state-action pair $(s_k, a_k) \sim \sigma_k$.

    \item Update the estimate $q_k$ as
    \begin{align}
    \label{eq: update iv mcopi}
    q_{k+1}(s,a)
    =
    \begin{cases}
    q_k(s,a) + \eta_k(s,a) \left(\tilde q_k(s,a) - q_k(s,a) \right), & (s,a)=(s_k,a_k) \\
    q_k(s,a) & \text{otherwise .}
    \end{cases}
    \end{align}
    where $\tilde q_k$ is the episode return defined in \eqref{eq: episode return}.
\end{enumerate}

Section~\ref{sec: counterexample iv mces}, together with Appendix~\ref{app: non greedy} and Example~5.12 in \cite{Bertsekas1996Neuro-DynamicProgramming}, shows that IV MCES may converge to suboptimal solutions when the sampling distributions are biased towards either greedy or non-greedy actions.
This section presents a way to compensate for such non-uniform sampling by appropriately normalizing the learning rates, and thereby restoring convergence to optimality.
The approach relies on the chain rule decomposition of the sampling distribution:
\begin{align}
\label{eq: chain rule}
\sigma_k(s,a) = f_k(s) \, g_k(a | s) .
\end{align}

To formalise this correction, 
let $\mathcal F_k$ denote the information available after computing $q_k$,
but before selecting policy $\pi_k$, sampling the initial pair $(s_k, a_k)$, and generating the episode $\mathcal E_k$.

\begin{theorem}
\label{thm: iv mcopi with 1/g scaling}
Assume
\begin{enumerate}[label=(\arabic*)]
    \item \textbf{Discounted or episodic MDP.}
    Either the discount factor satisfies $\gamma \in [0, 1)$ or each episode of the MDP almost surely reaches a terminal state.

    \item \textbf{Uniform tie-breaking of greedy policies.}
    There exists a constant \(\beta>0\) such that,
    if several policies are greedy with respect to $q_k$,
    \begin{equation}
    \label{eq:greedy-pol-tie-breaking}
    \mathbb P(\pi_k(a | s) = 1 \mid \mathcal F_k)\ge \beta,
        \qquad 
        \forall \, k \ge 0, \
        \forall \, s \in \sset, \
        \forall \, a \in \arg\max_{\mathcal A(s)} q_k(s,\cdot) .
    \end{equation}

    \item \textbf{Strict exploring starts.}
    The sampling distributions $\{\sigma_k\}_{k \ge 0}$ satisfy
    \begin{align}
    \label{eq: strict exploring starts}
        \sigma_k(s,a) \ge \sigma_{\min}
        \qquad 
        \forall \, k \ge 0, \
        \forall \, s \in \sset, \
        \forall \, a \in \aset(s) ,
    \end{align}
    for some constant $\sigma_{\min} \in (0,\infty)$.

    \item \textbf{Scalar, Robbins--Monro and comparable learning rates.} 
    The sequence $\{\alpha_k\}_{k \ge 0}$ forms a scalar non-increasing sequence: $\alpha_{k+1} \leq \alpha_k$ for every $k \ge 0$.
    It satisfies
    \begin{align}
    \label{eq: robbins monro conditions}
        \sum_{k =0}^\infty \alpha_k = \infty , 
        \qquad
        \sum_{k =0}^\infty \alpha_k^2 < \infty ,
    \end{align}
    and there exist a time $K_\alpha < \infty$ and a constant $\rho_\alpha \in(0,1]$ such that
    \begin{align}
    \label{eq: comparability of learning rates}
    \alpha_{k+i} \ge \rho_\alpha \alpha_k ,
    \qquad
    \forall \, k \ge K_\alpha, \
    \forall \, i \in \{0, \dots, \left\lfloor 1 / \alpha_k \right\rfloor\} .
    \end{align}
    Given that condition \eqref{eq: robbins monro conditions} requires that $\alpha_k \to 0$ and we study the asymptotic behaviour of MCES, we assume, without loss of generality, that $\alpha_k \in (0, 1]$.
\end{enumerate}
Then IV MCES converges almost surely to $q_{\pi_*}$ when using learning rates
\begin{align}
\label{eq: normalised learning rates}
\eta_k(s,a) := \dfrac{\alpha_k}{g_k(a|s)} .
\end{align}
\end{theorem}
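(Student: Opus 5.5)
We would prove the theorem by tracking the greedy policy $\pi_k$ state by state, rather than tracking the whole estimate $q_k$. The starting observation is that the normalised rates \eqref{eq: normalised learning rates} make the expected update at $(s,a)$ independent of the action-selection part of the sampling. Conditioning on $\mathcal F_k$ and on $\pi_k$, we have
\begin{align*}
\E\big[q_{k+1}(s,a)-q_k(s,a)\mid \mathcal F_k,\pi_k\big] = f_k(s)\,\alpha_k\big(q_{\pi_k}(s,a)-q_k(s,a)\big).
\end{align*}
So within a state $s$, every action relaxes towards $q_{\pi_k}(s,\cdot)$ at the common rate $f_k(s)\alpha_k$. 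This is exactly the uniformity that fails in the counterexample of Section~\ref{sec: counterexample iv mces}. Assumption~(3) gives $g_k(a|s)\ge\sigma_{\min}$, hence $\eta_k\le\alpha_k/\sigma_{\min}$, so the Robbins--Monro square-summability survives. Assumption~(1) bounds the returns, so the noise $v_k$ is a bounded martingale difference. Standard arguments then give, for every $s$, that the accumulated noise $\sum_k \eta_k \mathbb 1\{(s_k,a_k)=(s,a)\}v_k(s,a)$ converges, and also bound its tails over windows.

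The main argument is an induction over the states, ordered by how quickly the optimal values are pinned down, in the spirit of the feedforward proofs but driven by the policy. First, $q_k$ stays bounded, because each update is a convex-type step towards a bounded target (using $\eta_k\le 1$ eventually). Second, I would show that $\max_a q_k(s,a)$ is asymptotically at most $\max_a q_{\pi_*}(s,a)$, since $q_{\pi_k}\le q_{\pi_*}$ pointwise and all actions in $s$ move at a common rate. Third, and this is the key step, I would show that the optimal action cannot remain non-greedy. Suppose $a_*$ is not greedy in $s$ and $q_k(s,a_*)$ lies below its target. Then, since $q_k(s,a_*)$ and $q_k(s,\pi_k(s))$ contract towards their targets at the same rate, the gap between them moves towards $q_{\pi_k}(s,a_*)-q_{\pi_k}(s,\pi_k(s))$. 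If the successor states already follow optimal actions (induction hypothesis), this target gap is nonnegative, and strictly positive when $\pi_k(s)$ is suboptimal. The comparability condition \eqref{eq: comparability of learning rates} guarantees that over a window of length about $1/\alpha_k$ the rates do not collapse. Hence the drift of the gap dominates the window noise, and the greedy action flips to an optimal one within the window. Assumption~(2) handles ties: whenever several actions are tied, the optimal one is selected with probability at least $\beta$, so ties cannot trap the policy.

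Once $\pi_k$ is eventually optimal in every state, the dynamics become a linear stochastic approximation of $q_{\pi_*}$ with per-pair rates $f_k(s)\alpha_k$. These rates are bounded below by $\sigma_{\min}\alpha_k$, which sums to infinity. Convergence to $q_{\pi_*}$ then follows from the standard asynchronous stochastic-approximation theorem, since the map is a contraction in the discounted case and a weighted-max-norm contraction under proper policies in the episodic case.

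The main obstacle is the third step. In a general MDP there is no feedforward ordering, so the induction on states has to be replaced by something else. I would first try a potential argument: look at the set of states where $\pi_k$ is not optimal, and use the policy-improvement inequality to show that this set shrinks infinitely often and cannot regrow. The uniform within-state rates are what make the estimated advantage $q_k(s,a_*)-q_k(s,\pi_k(s))$ track the true advantage under $\pi_k$ up to noise, and that advantage is positive on some state whenever $\pi_k$ is suboptimal. Making this rigorous requires showing that the lag in tracking $q_{\pi_k}$ is small compared with the windows on which the policy stays fixed. This is where the non-increasing rates and the comparability constant $\rho_\alpha$ will be used.
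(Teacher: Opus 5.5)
\paragraph{Verdict.} Your proposal has a genuine gap, and it sits exactly at the step you flag as the main obstacle. Neither mechanism you offer for that step would close it.

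\paragraph{Why the third step fails.} The induction over states needs an acyclic ordering, which general MDPs lack. The fallback potential does not behave the way you need under either reading.
\begin{itemize}
\item If it is the set of states where $\pi_k(s)$ is a non-optimal action, it can grow at an improving switch. Take a state $s$ with $a_1$ (reward $0$, move to $t$) and $a_2$ (reward $1$, terminate). Take a state $t$ with $b_1$ (reward $10$, terminate) and $b_2$ (reward $0$, terminate), and $\gamma=0.9$. Under $\pi_k=(a_1,b_2)$ we have $q_{\pi_k}(s,a_2)=1>0=q_{\pi_k}(s,a_1)$. So a mean-field step can switch $s$ to $a_2$ while $t$ still plays $b_2$. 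The non-optimal set then grows from $\{t\}$ to $\{s,t\}$, even though $v_{\pi_{k+1}}\ge v_{\pi_k}$.
\item If it is $\{s: v_{\pi_k}(s)<v_{\pi_*}(s)\}$, it can stay unchanged across many improving transitions, so it is not a strict potential.
\end{itemize}
Your window argument also presupposes that $\pi_k$ stays fixed long enough for $q_k$ to track $q_{\pi_k}$. Nothing guarantees this. $q_k$ can sit near several decision boundaries while the policy changes at other states, and this can flip the sign of the target gap at $s$ inside the window.

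\paragraph{The missing idea.} No tracking and no state ordering are needed. Let $c=\alpha_k f_k(s)\in(0,1]$, which is common to all actions at $s$. Greediness of $\pi_{k+1}$ for the mean-field iterate gives
\[
c\bigl(q_{\pi_k}(s,\pi_{k+1}(s))-q_{\pi_k}(s,\pi_k(s))\bigr)\ge(1-c)\bigl(q_k(s,\pi_k(s))-q_k(s,\pi_{k+1}(s))\bigr)\ge 0 .
\]
This holds at every step and every state, however far $q_k$ is from $q_{\pi_k}$. By the policy improvement theorem, every mean-field switch is therefore weakly improving. The right potential is $v_{\pi_k}$ itself, over the finite set of deterministic policies. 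What the within-state uniformity buys is this one-step inequality, not tracking of advantages.

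\paragraph{Handling the noise.} Noise must then be controlled near decision boundaries. There, a single $O(\alpha_k)$ step can flip $\pi_k$ to a worse policy, so ``drift dominates window noise'' is not sufficient. The paper instead uses three events with probabilities bounded below:
\begin{itemize}
\item seed a margin of order $\alpha_k$;
\item grow it to order one along the mean-field drift over a window of roughly $1/\alpha_k$ steps;
\item lock it in until the next transition.
\end{itemize}
Each transition is then an improvement with probability at least $p_sp_gp_l$. Finiteness of the policy set makes the probability of never locking into $Q(\pi_*)$ vanish.

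\paragraph{Smaller points.} Your final linear stochastic-approximation step is fine once $\pi_k=\pi_*$ for good, but that lock-in is part of what must be proved. In the episodic case with $\gamma=1$, returns are not bounded, only of bounded conditional second moment. So your boundedness claims should be replaced by a bound of the form $\mathbb{E}[w_k^2\mid\mathcal F_k]\le c_w(1+\|q_k\|^2)$.
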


The scaling in \eqref{eq: normalised learning rates} is a local analogue of SA learning rates.
While $1/n_k(s,a)$ gives a greater weight to actions that have been sampled less often in the past,
the factor $1/g_k(a|s)$ gives a greater weight to actions which are less likely to be sampled at the current step.
The correction is thus local in time: it compensates for the current sampling bias rather than the historical sampling frequency.

The key idea in proving \autoref{thm: iv mcopi with 1/g scaling} is to analyse the sequence of policies $\{\pi_k\}_{k\ge 0}$ rather than the evolution of the estimates $\{q_k\}_{k\ge 0}$.
We find that the normalization in \eqref{eq: normalised learning rates}
causes the associated mean-field dynamics to generate monotonically improving policies
because it equalises the rate of change across actions on a state-by-state basis at every step.
This structure allows us to establish convergence of the original stochastic solutions. The complete proof is given in \autoref{app: proof of stochastic theorem}.


\begin{proof}[Proof sketch]
Equation \eqref{eq: update iv mcopi} with learning rates as in \eqref{eq: normalised learning rates} is equivalent to
\begin{align}
\nonumber
    q_{k+1}(s,a)
    &=
    q_k(s,a) + \alpha_k \left( \frac{\sigma_k(s,a)}{g_k(a|s)} \big( q_{\pi_k}(s,a) - q_k(s,a) \big) + w_k(s,a) \right)
    \\
\label{eq: iv mcopi in stochastic theorem}
    &=
    q_k(s,a) + \alpha_k \left( f_k(s) \big( q_{\pi_k}(s,a) - q_k(s,a) \big) + w_k(s,a) \right)
\end{align}
for every state-action pair.
Here, $w_k$ incorporates the perturbations caused by the episode simulation and the asynchronous updates of state-action pairs.
Given that $\tilde q_k$ is an unbiased estimate of $q_{\pi_k}$ under initial-visit updates, it follows that $\mathbb E[w_k(s,a) \mid \mathcal F_k] = 0$.

    
\textit{Step 1. Mean-field generates monotonically improving policies}

Consider the mean-field dynamics underlying \eqref{eq: iv mcopi in stochastic theorem}:
\begin{align}
\label{eq: deterministic mcopi}
    q_{k+1}(s,a)
    =
    q_k(s,a) + \alpha_k f_k(s) (q_{\pi_k}(s,a) - q_k(s,a)) .
\end{align}
Since the new policy $\pi_{k+1}$ will be greedy with respect to $q_{k+1}$, we have $q_{k+1}(s,\pi_{k+1}(s)) \ge q_{k+1}(s, \pi_k(s))$ for every state $s$.
Using \eqref{eq: deterministic mcopi}, this becomes
\begin{multline*}    
    q_k(s, \pi_{k+1}(s)) + \alpha_k 
    f_k(s)
    \big( q_{\pi_k}(s, \pi_{k+1}(s)) - q_k(s, \pi_{k+1}(s)) \big)
    \\
    \ge
    q_k(s, \pi_k(s)) + \alpha_k 
    f_k(s)
    \big( q_{\pi_k}(s, \pi_k(s)) - q_k(s, \pi_k(s)) \big) .
\end{multline*}
Rearranging the inequality
yields
\begin{align*}    
    q_{\pi_k}(s, \pi_{k+1}(s)) - q_{\pi_k}(s, \pi_k(s))
    \ge
    \dfrac{1 - \alpha_k f_k(s)}{\alpha_k f_k(s)}
    \big( q_k(s, \pi_k(s)) - q_k(s, \pi_{k+1}(s)) \big) .
\end{align*}
Since $\pi_k$ is greedy with respect to $q_k$ and $0 < \alpha_k f_k(s) \le 1$ for sufficiently large $k$, 
it follows that
\begin{align*}
    q_{\pi_k}(s, \pi_{k+1}(s)) - q_{\pi_k}(s, \pi_k(s)) \ge 0
\end{align*}
for every state $s$.
As a result, the Policy Improvement Theorem guarantees that the new policy $\pi_{k+1}$ is at least as good as policy $\pi_k$~\cite{Sutton2018ReinforcementIntroduction}.

\textit{Step 2. Noise cannot consistently prevent policy improvement}

We enforce improving transitions by maintaining a positive distance to "bad" regions using three events with fixed non-zero probabilities:
\begin{enumerate}
\item \emph{Seed}: Force the distance to $O(\alpha_k)$, with probability $p_s>0$.
\item \emph{Grow}: Bring it to $O(1)$ using the mean-field drift, with probability $p_g>0$.
\item \emph{Lock-in}: Keep it strictly positive until the next transition, with probability $p_l>0$.
\end{enumerate}
Since, at any time, the next policy will be better with probability at least $p_s p_g p_l > 0$,
the probability of never reaching and locking-in to the optimal region grows as $\big(1 - (p_s p_g p_l)^{|\mathcal P|} \big)^n$, where $\mathcal P$ is the set of deterministic policies.
This probability tends to zero as $n$ tends to infinity because the number of deterministic policies is finite.
As a result, the complement event, i.e. locking into the optimal region and $q_k \to q_{\pi_*}$, occurs eventually with probability 1.
\end{proof}





Since \(g_k(\cdot \mid s)\) is a distribution over \(\mathcal A(s)\), its values scale as \(1/|\mathcal A(s)|\). Hence, \(1/g_k(a\mid s)\) is typically larger in states with more available actions.
If necessary, dividing the learning rates in \eqref{eq: normalised learning rates} by \(|\mathcal A(s)|\) cancels out this effect.
Since this additional factor is constant across actions within a state, it does not affect the argument. It only introduces extra state-dependent scaling factors in the proof.

Earlier work recovers convergence to optimality by scaling the learning rates with $1/\sigma_k$~\cite{Tsitsiklis2002OnIteration, Liu2021OnStarts}.
However, this requires knowing the sampling distribution over \textit{all} state-action pairs.
In contrast, the learning rates in \eqref{eq: normalised learning rates} 
only require the conditional sampling probability of the
initial action given the initial state.
Thus, convergence can still be recovered when the initial-state distribution is unknown, provided the action distribution within each state is known. This is often the case in practice, for example when episodes are initialized using an epsilon-greedy policy.
This weaker condition is more practical for real-world applications,
in particular when the state-space is large. For instance, chess has
$O(10^{45})$ different board layouts but only $O(10^1)$ legal moves per layout.
Keeping track of a distribution over the joint state-action space is not feasible, whereas representing the action distribution within a given state is straightforward.

Earlier work recovers convergence to optimality by making updates effectively uniform over all state-action pairs, 
whereas \autoref{thm: iv mcopi with 1/g scaling} shows that uniform updates over the actions within each state are sufficient.
The earlier work is based on the theorem of Tsitsiklis in~\cite{Tsitsiklis2002OnIteration}, which uses commutativity in a central way.
Since this technique does not work when updates are only uniform over the actions within each state, a different mathematical approach is required.


\section{Counterexample for first-visit MCES with sample average updates}\label{sec:fv}

Under first-visits, the state-action pairs updated after an episode depend on the transition dynamics induced by the greedy policy on the MDP.
FV MCES is thus harder to analyse than IV MCES and counterexamples are harder to construct.
This section presents such a counterexample: a cyclic MDP with $13$ states
generalising \autoref{fig: explain loop on-policy bias MDP}, on which FV/SA MCES frequently converges to a suboptimal solution.
Each state has two actions:
$a_*$ gives reward $1$ and moves to the next state with probability $\tau$ or terminates the episode with probability $1-\tau$; action $\bar a$ yields no reward and stays in the same state.

%

We found the example by first analysing FV/SA MCES without noise, namely when $q_k(s,a)$ is updated towards the exact value $q_{\pi_k}(s,a)$, and $n_k(s,a)$ is incremented by the probability of updating $(s,a)$ after an episode simulated with policy $\pi_k$.
For $N=3$ this deterministic process does not converge to a suboptimal solution. 
For $N=4$ it can oscillate indefinitely between suboptimal policies, but 
this behaviour is fragile as it requires that the greedy policy changes in two states simultaneously.
For $N \ge 5$ the same type of oscillation can proceed through one-state policy changes;  the oscillation alternates between policies with one and two "stay" actions, as depicted in \autoref{fig: long MDP}. 
This suboptimal behaviour persists under the noise of the fully sampled FV/SA MCES process for sufficiently late starts, becoming increasingly robust to fluctuations early in the simulation as $N$ grows. We describe below the case N=13.


The deterministic solutions oscillate indefinitely between the 26 policies when initialising episodes as follows.
At each time step, with probability $\epsilon_{\mathrm{ES}}$ sample the initial state–action pair uniformly amongst the 26 pairs;
otherwise, choose it according to the current greedy policy.
If that policy has a single "stay" action at $s_i$, start the episode in $(s_{i+6}, \bar a)$.
If the policy has adjacent "stay" actions at $s_i$ and $s_{i+1}$, start in $(s_{i-1}, \bar a)$ with probability $\lambda$ or in $(s_{i+1}, a_*)$ with probability $(1-\lambda)$.
All state indices are to be taken modulo 13.
Under this initialisation scheme, 
\autoref{fig:spiral} shows that the deterministic solutions spiral into a fixed point on the the boundary.
This behaviour is robust as a large set of initial conditions produce this spiral.
We derive the value $\widehat q$ in Appendix~\ref{sec:balance-general},
and show that the subsequence $\{q_{k + 26 i}\}_{i \ge 0}$ moves along a straight line towards that fixed point in Appendix~\ref{sec:linear-interpolation}.
As a result, it is guaranteed that the deterministic solutions visits only these 26 policies.

\input{figures/spiral}

Fully sampled FV/SA MCES solutions are extremely noisy, due to both sample returns and exploring starts.
As a result, they frequently visit off-cycle policies.
We then initialise episodes to 
guide the process back to an on-cycle policy as follows.
On visiting the optimal all-"move" policy,
we start episodes in the "stay" action of the state for which the action-values are closest to the boundary.
The first visit structure ensures that "move" is then also updated in that, and potentially downstream, states, 
and also that "move" actions are updated more often than "stay" actions overall. The $1/n_k(s,a)$ weighting then results in solutions being pushed towards and eventually across the boundary, creating a "stay" state.
In all other off-cycle policies,
we first consider all states where "stay" is greedy and keep only those with the fewest neighbours that also prefer "stay". Among that subset, we choose the state whose action values are closest to the decision boundary, then initialize the episode in the "stay" action of that state.
This eventually results in solutions crossing the boundary in the opposite direction, reducing the number of "stay" states until only a singleton or adjacent pair remain.   
Exploring starts remain active during the recovery.
The sampling strategy is thus completely determined by the current $q_k$ (as would be the case for an $\epsilon$-greedy policy, for example) with a uniform lower bound on the probability of sampling any state-action pair at each step.

We simulated $40,000$ solutions of FV/SA MCES with $\gamma=0.9999$, $\tau\approx 0.5123$ and $\epsilon_{ES}=10^{-4}$ for $10^9$ steps for a range of initial counts $n_k(s,\bar a)=M$, $n_k(s,a_*)=2.01M$ with $M$ from  $3000$ to $12000$ and found that all but a proportion of approximately $(2200/M)^{5.4}$ remained trapped in a shrinking strip around the boundary significantly below the optimal value. In each run, for approximately $14.9\%$ of steps the greedy policy was the all-"move" policy $\pi_*$ and for approximately $1.5\%$ of steps it was another off-cycle policy. Thus for a large majority of steps the system is in a cycle phase.       
For a representative sample of runs which hadn't escaped at $10^9$ steps we continued the simulation up to $10^{12}$ steps and none escaped. We then ran $8$ simulations from $M=10^5$ and all remained trapped at $10^{12}$ steps. That is, all trials that survived $10^9$ steps went on to survive $10^{12}$ steps. 
\autoref{fig:fvsasim} shows representative behaviour from a start at $M=1000$, demonstrating an initial lock in followed by the long $1/k$ tail.
Altogether, these simulations provide strong evidence 
of high probability non-convergence to optimality from late starts of FV/SA MCES for this MDP; i.e.
that, for large $M$, starts from $n_k(s,\bar a)=M$ escape from the stable suboptimal basin with probability $O(M^{-K})\rightarrow 0$ for $K\approx 5.4$. 

\input{figures/fv-sims}

\section{Conclusion}

We have presented two new counterexamples showing that both initial-visit and first-visit MCES can converge to suboptimal solutions, 
and proposed a practical fix that guarantees convergence to optimality in the initial-visit case.

Previous counterexamples for initial-visit MCES rely on sampling distributions that heavily favour non-greedy actions at every step.
Our example shows that stable suboptimal solutions exist even when greedy actions are updated more frequently, as typically occurs during the exploitation phase.
The suboptimal behaviour arose because the sample-average learning rates introduced an effective bias towards non-greedy actions when solutions were in the optimal region.

We then presented a practical learning-rate scheme that restores convergence to optimality despite updating actions at different frequencies. 
The key idea is to scale learning rates inversely with update frequencies on a state-by-state basis, so that the effective update size is uniform across actions within each state. Under this correction, the underlying mean-field generates monotonically improving policies. 
Since the stochastic perturbations cannot consistently prevent this improvement, initial-visit MCES eventually converges to the optimal policy and its action values.

Finally, we constructed a 13-state MDP in which first-visit MCES with sample-average updates converges to a suboptimal solution.
This example establishes that any convergence proof for first-visit MCES would need to show that solutions always escape suboptimal mean-field attractors.
In our simulations this occurs with sharply decreasing frequency for later starts, providing strong evidence that exploring starts alone do not guarantee convergence of Monte Carlo control methods to optimality.

Overall, these results highlight that
the asymptotic behaviour of MCES depends critically on the interplay between learning rates and sampling distributions.
By controlling the frequency and magnitude of updates to each action, these parameters determine the effective update size.
Initial experiments (not presented here) suggest that MCES converges to optimality when, within each state, the effective update to the greedy action is at least as large as that to any other action.
Establishing this condition formally is an important direction for future work
as it may provide a foundation for analysing scalable MCES variants.

\clearpage
\bibliographystyle{unsrt} 
\bibliography{References/references}

\appendix

\newpage
\section{Counterexample for initial-visit MCES with non-greedy bias}
\label{app: non greedy}

Bertsekas and Tsitsiklis show in \cite[Example~5.12]{Bertsekas1996Neuro-DynamicProgramming} that IV MCES can converge to suboptimal solutions when greedy actions are updated less frequently than non-greedy actions.
The single-state MDP in \autoref{fig: experiment off-policy bias MDP} is the simplest such example.

Episodes are initialised to favour the non-greedy action,
meaning
when $q_k(s,\pi_*(s)) \ge q_k(s,\bar\pi(s))$
the initial state-action pair is $(s,\pi_*(s))$ with a small probability $\epsilon$ and $(s,\bar\pi(s))$ with probability $1-\epsilon$,
with the probabilities reversed otherwise.

\autoref{fig: experiment for off-policy bias} shows simulations with $\epsilon = 0.1$. The streamlines indicate the mean-field flow.
Solutions starting to the right of the separatrix typically converge to $q_{\pi_*}$, whereas those starting to the left leave the optimal region $Q(\pi_*)$ and converge to a boundary point.


\input{figures/counter-MDP}

\input{figures/off-policy-simulation1}

\section{Proof of \autoref{thm: iv mcopi with 1/g scaling}}
\label{app: proof of stochastic theorem}





Define the perturbation $v_k := \tilde q_k - q_{\pi_k}$ where $\tilde q_k$ is the episode return defined in \eqref{eq: episode return},
as well as the indicator $u_k(s, a) := \mathbf{1}_{\{(s,a)=(s_k,a_k)\}}$ that captures the randomness of asynchronous updates.
Combine these two sources of noise into a single stochastic perturbation $w_k$ defined as
\begin{align}
\label{eq: noise definition appendix}
w_k(s,a) 
&:=
\frac{1}{g_k(a | s)}
\Big( u_k(s,a) v_k(s,a)
+
\big( u_k(s,a) - \sigma_k(s,a) \big) \big( q_{\pi_k}(s,a) - q_k(s,a) \big) \Big) .
\end{align}
Equation \eqref{eq: update iv mcopi} with learning rates as in \eqref{eq: normalised learning rates} is then equivalent, for every state-action pair, to
\begin{align}
\nonumber
q_{k+1}(s,a)
&=
q_k(s,a) + \alpha_k \frac{u_k(s,a)}{g_k(a | s)} \big( q_{\pi_k}(s,a) + v_k(s,a) - q_k(s,a) \big)
\\
\label{eq: stochastic rep of normalised}
    &=
    q_k(s,a) + \alpha_k \big( f_k(s) ( q_{\pi_k}(s,a) - q_k(s,a) ) + w_k(s,a) \big) 
\end{align}
where $\pi_k$ is a deterministic policy that is greedy with respect to $q_k$.

\autoref{thm: iv mcopi with 1/g scaling} is an adaptation of Proposition~20 in \cite{Oliviers2026ConvergenceUpdates}. The event construction and policy-improvement argument are unchanged: 
stochastic perturbations cannot consistently undermine the policy improvement driven by the mean-field dynamics,
so $\pi_k$ eventually stabilises at $\pi_*$ and $q_k \to q_{\pi_*}$.

The only modification is the normalized learning rate $\alpha_k/g_k(a\mid s)$ in place of $\alpha_k$. Since $g_k(a\mid s) \in (0,1]$, this rescaling is positive and at least as large as $\alpha_k$, so the induction in Lemma~15 is unaffected. Lemmas~17 and~18 require only that $w_k$ is a martingale difference sequence with bounded second moment; this is verified in the lemma below.




\begin{lemma}
\label{thm: properties of noise}
There exists a constant $c_w \in [0, \infty)$ 
such that
\begin{align}
\label{eq: condition on combined noise for proof}
\mathbb E[w_k(s,a) \mid \mathcal F_k] = 0,
\ \ \
\mathbb E[w_k^2(s,a) \mid \mathcal F_k] \le c_w(1+\|q_k\|^2),
\ \ \
\forall \, k \ge 0, \,
\forall \, s \in \mathcal S, \,
\forall \, a \in \mathcal A(s) .
\end{align}
\end{lemma}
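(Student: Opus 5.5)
The plan is to split $w_k(s,a)$ into its two summands, $A_k := u_k v_k / g_k$ and $B_k := (u_k - \sigma_k)(q_{\pi_k} - q_k)/g_k$, and treat each by conditioning first on $\mathcal F_k$ together with the choice of $\pi_k$ and of the initial pair $(s_k,a_k)$. Call this enlarged $\sigma$-algebra $\mathcal G_k \supseteq \mathcal F_k$. The key structural facts are the following. Given $\mathcal F_k$, the quantities $q_k$, $\sigma_k$, $f_k$ and $g_k$ are known. The policy $\pi_k$ is drawn by the tie-breaking rule independently of the sampled pair. And $u_k$ is determined by $(s_k,a_k)\sim\sigma_k$, so $\mathbb E[u_k(s,a)\mid \mathcal F_k,\pi_k] = \sigma_k(s,a)$.

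\textbf{Zero mean.} For $B_k$, I would condition on $(\mathcal F_k,\pi_k)$. Then $(q_{\pi_k}-q_k)/g_k$ is fixed and $\mathbb E[u_k - \sigma_k \mid \mathcal F_k,\pi_k]=0$, which gives $\mathbb E[B_k\mid\mathcal F_k]=0$ by the tower property. For $A_k$, I would condition on $\mathcal G_k$. On the event $u_k(s,a)=1$, the episode starts at $(s,a)$ and then follows $\pi_k$, so the initial-visit return satisfies $\mathbb E[\tilde q_k(s,a)\mid\mathcal G_k] = q_{\pi_k}(s,a)$, which is the unbiasedness cited in the introduction. This gives $\mathbb E[u_k v_k\mid\mathcal G_k]=0$, and the tower property again yields $\mathbb E[A_k\mid\mathcal F_k]=0$. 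On the event $u_k=0$, $v_k$ multiplies zero, so how $v_k$ is defined off the start pair is irrelevant.

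\textbf{Second moment.} Using $(x+y)^2\le 2x^2+2y^2$ and $g_k(a\mid s)\ge\sigma_k(s,a)\ge\sigma_{\min}$, since $f_k\le 1$, the prefactor $1/g_k^2$ is at most $1/\sigma_{\min}^2$. For $B_k$, $|u_k-\sigma_k|\le1$ and $|q_{\pi_k}(s,a)-q_k(s,a)|\le \|q_{\pi_k}\|_\infty + \|q_k\|$, so its square is at most $2(\|q_{\pi_k}\|^2+\|q_k\|^2)$. For $A_k$, we need $\mathbb E[v_k^2\mid\mathcal G_k,u_k=1]\le \mathbb E[\tilde q_k^2\mid\cdot]$, which is bounded by a constant $C$ uniform over deterministic policies:
\begin{itemize}
\item In the discounted case with bounded rewards $|r_t|\le R$, we have $|\tilde q_k|\le R/(1-\gamma)$.
\item In the episodic case, we need a uniform bound on the second moment of the return. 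This follows from finitely many deterministic policies, each reaching termination almost surely, which in a finite MDP implies geometric tails of the episode length. Together with bounded rewards, this gives a finite second moment, and we take the maximum over the finite policy set.
\end{itemize}
Likewise $\sup_\pi\|q_\pi\|<\infty$ because the policy set is finite. Collecting terms gives $\mathbb E[w_k^2\mid\mathcal F_k]\le c_w(1+\|q_k\|^2)$ with $c_w = 4\max(C+\sup_\pi\|q_\pi\|^2,\,1)/\sigma_{\min}^2$, or a similar constant.

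\textbf{Main obstacle.} I expect the delicate step to be justifying the uniform second-moment bound on returns in the episodic ($\gamma=1$) case, together with the measurability bookkeeping: $\pi_k$ is random given $\mathcal F_k$ because of tie-breaking, so the zero-mean argument must condition on $\pi_k$ before integrating it out. I would handle the tail bound by noting that, for each proper deterministic policy in a finite MDP, there exist $m$ and $\rho<1$ with $\mathbb P(\text{not terminated after } m \text{ steps})\le\rho$ from every state. This gives $\mathbb E[T^2]<\infty$ for the episode length $T$, and then $\tilde q_k^2\le R^2T^2$. I would state bounded rewards as implicit in the finite-MDP assumption.
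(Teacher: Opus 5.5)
Your proposal is correct and follows essentially the same route as the paper. Both split $w_k$ into the $u_k v_k$ term and the $(u_k-\sigma_k)(q_{\pi_k}-q_k)$ term, get the zero mean by conditioning on information that includes $\pi_k$ (the paper's $\mathcal F_k'$) and then applying the tower property, and bound the second moment via $(x+y)^2\le 2x^2+2y^2$, $g_k(a\mid s)\ge\sigma_{\min}$ and $\sup_\pi\|q_\pi\|<\infty$. The only differences are that you prove the uniform bound on the return variance yourself (bounded discounted returns, or geometric tails of the episode length under proper deterministic policies) where the paper cites Singh and Sutton, and that you condition on the start pair so $v_k$ is only needed on $\{u_k=1\}$.
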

\begin{proof}
Fix a time $k\ge 0$, a state $s \in \mathcal S$ and an action $a \in \mathcal A(s)$.
Let $\mathcal F_k'$ represent the available information right after selecting the policy $\pi_k$ but before sampling the initial state-action pair $(s_k, a_k)$, so $\mathcal F_k \subseteq \mathcal F_k'$.

Under initial-visit updates, $\tilde q_k$ is an unbiased estimate of $q_{\pi_k}$ with uniformly bounded variance
\cite{Singh1996ReinforcementTraces}.
Thus there exists a constant $c_v \in [0,\infty)$ such that
\begin{align}
\label{eq: properties episode noise}
\mathbb E\!\left[v_k(s,a) \mid \mathcal F_k \right] = 0 ,
\qquad
\mathbb E\!\left[v_k^2(s,a) \mid \mathcal F_k \right] \le c_v,
\qquad
\forall \, k \ge 0 .
\end{align}
Further, 
$u_k(s,a) v_k(s,a) = 0$ on $\{u_k(s,a)=0\}$.
Hence,
\begin{align}
\label{eq: noise unbiased part 1}
\mathbb E\big[ u_k(s,a) v_k(s,a) \mid \mathcal F'_k \big]
=
\mathbb E \Big[ 
    \mathbf{1}_{\{u_k(s,a)=1\}} \ 
    \mathbb E \big[ v_k(s,a) \mid \mathcal F'_k, u_k(s,a) = 1 \big]
    \mid \mathcal F'_k \Big]
    = 0 .
\end{align}
Also,
$\E[u_k(s,a) \mid \mathcal F'_k] = \sigma_k(s,a)$ 
and
$\sigma_k(s,a)$ and $(q_{\pi_k}(s,a)-q_k(s,a))$ are $\mathcal F'_k$-measurable. It follows that
\begin{multline}
\mathbb E \!\left[ 
\big( u_k(s,a) - \sigma_k(s,a) \big) \big( q_{\pi_k}(s,a) - q_k(s,a) \big)
\mid \mathcal F'_k
\right]
\\
\label{eq: noise unbiased part 2}
=
\big(
\mathbb E [ u_k(s,a) \mid \mathcal F'_k ]
- \sigma_k(s,a)
\big)
\big( q_{\pi_k}(s,a)-q_k(s,a) \big)
=
0 .
\end{multline}
Since $g_k(a | s)$ is $\mathcal F'_k$-measurable,
dividing \eqref{eq: noise unbiased part 1} and \eqref{eq: noise unbiased part 2} by $g_k(a | s)$ and summing the results yields $\mathbb E[w_k(s,a) \mid \mathcal F'_k] = 0$.
Applying the tower property then gives
\begin{align}
\label{eq: noise is unbiased}
\mathbb E [ w_k(s,a) \mid \mathcal F_k ]
= \mathbb E \big[ \mathbb E [ w_k(s,a) \mid \mathcal F'_k] \mid \mathcal F_k \big]
= 0 .
\end{align}

Finally,
$g_k^2(a | s) \ge \sigma_{\min}^2$ by the strict exploring starts condition in \eqref{eq: strict exploring starts},
$u_k^2(s,a) \le 1$ 
and $(u_k(s,a) - \sigma_k(s,a))^2 \le 1$ for every state-action pair.
Using $(x+y)^2 \le 2 x^2 + 2 y^2$ on \eqref{eq: noise definition appendix} thus yields
\begin{align*} 
w_k^2(s,a) 
&\le
\frac{1}{\sigma_{\min}^2} \big( 2 v_k^2(s,a) + 2 (q_{\pi_k}(s,a)-q_k(s,a) )^2 \big)
\\
&\le
\frac{1}{\sigma_{\min}^2} \big( 2 v_k^2(s,a) + 4 q_{\pi_k}^2(s,a) + 4 q_k^2(s,a) \big) .
\end{align*}
Since all action-values are bounded,
there exists a constant $c_\polset \in [0,\infty)$ such that $\|q_\pi\|^2 \leq c_\polset$ for every policy $\pi$. 
As a result, using \eqref{eq: properties episode noise} yields
\begin{align}
\nonumber
\mathbb E \!\big[ w_k^2(s,a) \mid \mathcal F_k \big]
&\le
\frac{2}{\sigma_{\min}^2} ( c_v + 2 c_\polset + 2 \|q_k\|^2 ) .
\\
\label{eq: noise has bounded variance}
&\le
\max\left\{ \frac{2}{\sigma_{\min}^2} ( c_v + 2 c_\polset), \frac{4}{\sigma_{\min}^2} \right\} ( 1 + \|q_k\|^2 ) .
\end{align}
Equations \eqref{eq: noise is unbiased} and \eqref{eq: noise has bounded variance} confirm that the perturbations $\{w_k\}_{k \ge 0}$ satisfy the conditions in \eqref{eq: condition on combined noise for proof}.
\end{proof}

\newcommand{\astar}{a_*}
\newcommand{\abar}{\bar a}
\newcommand{\epsES}{\varepsilon_{\rm ES}}
\newcommand{\qhat}{\widehat q}
\newcommand{\calZ}{\mathcal Z}
\newcommand{\calS}{\mathcal S}

\section{13-state First-Visit MCES example}
\label{app:fv}

\subsection{MDP, cycle, and notation}

This appendix describes the discounted 13-state example with 
exploring starts.  The discount and exploring-start perturbation level are
\begin{equation*}
    \gamma=0.9999,
    \qquad
    \epsES=10^{-4}.
\end{equation*}
The "move" action is denoted by $\astar$ and the "stay" action by $\abar$.
A successful "move" sends
\begin{equation*}
    s_i\longmapsto s_{i+1},
\end{equation*}
with indices understood modulo $13$.  See Figure~\ref{fig: long MDP}.
Action $\astar$ gives reward $1$ and succeeds with probability $\tau$; on success it moves to the next state, and on failure the episode terminates.  Action $\abar$ gives reward $0$ and remains in the same state.

A deterministic greedy policy is represented by its "stay" set $S\subseteq\{s_1,\ldots,s_{13}\}$.  Its value function satisfies
\begin{equation*}
V_S(s_i)=
\begin{cases}
\gamma V_S(s_i), & s_i\in S,\\[1mm]
1+\gamma\tau V_S(s_{i+1}), & s_i\notin S,
\end{cases}
\end{equation*}
and the exact policy action values are
\begin{equation*}
q_S(s_i,\astar)=1+\gamma\tau V_S(s_{i+1}),
\qquad
q_S(s_i,\abar)=\gamma V_S(s_i).
\end{equation*}
Because $\gamma<1$, "stay" states satisfy $V_S(s_i)=0$.

The intended adjacent cycle is
\begin{equation}
\label{eq:intended-cycle}
\{s_1\},\{s_1,s_{13}\},\{s_{13}\},\{s_{13},s_{12}\},\ldots,
\{s_2\},\{s_2,s_1\}.
\end{equation}
Equivalently, the local pattern is
\begin{equation*}
    \{s_i\}\longrightarrow \{s_i,s_{i-1}\}\longrightarrow \{s_{i-1}\}.
\end{equation*}
The normal start laws will be chosen later.

There are two requirements for a deterministic cycle. Firstly it must
converge to a point on the boundary and secondly it must follow the
right set of greedy policies as it does so. The first condition we
call the full cycle balance condition and the second the sign-margin
criterion. These are the subjects of the next two sections.

\subsection{General full-cycle balance equations}
\label{sec:balance-general}
We require that, provided the trajectory follows the right policies,
it converges onto the boundary.

Let $\calZ=\{(s_i,a):1\le i\le13,\ a\in\{\astar,\abar\}\}$ be the set
of state-action starts.  For phase $j$ in the intended cycle, let
$S_j$ be the "stay" set, $q_j=q_{S_j}$ its exact action-values, and
$\nu_j$ its nominal start distribution on $\calZ$. We impose a
rotational symmetry, that 
$\nu_{\{s_{i}\}}(s_{i+k},a)$ is a function of $k$ only, so it is only
necessary to specify one distribution over $\calZ$.
With
exploring-starts the  mixture is
\begin{equation*}
    \nu_j^{\epsES}=(1-\epsES)\nu_j+\epsES U_{\calZ},
    \qquad
    U_{\calZ}(z)={1\over 26}.
\end{equation*}
Let $\mu_j^z(s,a)$ be the probability of visiting $(s,a)$ when the episode starts from $z$ and follows policy $S_j$ after the start action.  The phase-$j$ first-visit probability under the perturbed start law is
\begin{equation*}
    \mu_j(s,a)=\sum_{z\in\calZ}\nu_j^{\epsES}(z)\mu_j^z(s,a).
\end{equation*}

For one complete pass through the 26 intended phases, define
\begin{equation*}
    B_a(s)=\sum_{j=1}^{26}\mu_j(s,a),
    \qquad
    A_a(s)=\sum_{j=1}^{26}\mu_j(s,a)q_j(s,a).
\end{equation*}
Here $B_a(s)$ is the expected number of first-visit updates to $(s,a)$ per normal full cycle, and $A_a(s)$ is the corresponding expected return-weighted update mass.

If fractional expected updates are applied over one full cycle, then
\begin{equation*}
    N^+(s,a)=N(s,a)+B_a(s)
\end{equation*}
and
\begin{equation*}
    q^+(s,a)=
    {N(s,a)q(s,a)+A_a(s)\over N(s,a)+B_a(s)}.
\end{equation*}
Equivalently,
\begin{equation*}
    q^+(s,a)-q(s,a)
    ={B_a(s)\over N(s,a)+B_a(s)}
    \left({A_a(s)\over B_a(s)}-q(s,a)\right).
\end{equation*}
Thus the full-cycle target for coordinate $(s,a)$ is $A_a(s)/B_a(s)$.

The boundary cancellation condition asks that, for every state,
\begin{equation}
    {A_{\astar}(s)\over B_{\astar}(s)}=
    {A_{\abar}(s)\over B_{\abar}(s)}=
    \qhat.
\label{eq:abstract-balance}
\end{equation}
i.e. the target is on the boundary. By rotational symmetry it is enough to impose the scalar equation at one state,
\begin{equation}
    F:=A_{\astar}(s_1)B_{\abar}(s_1)-A_{\abar}(s_1)B_{\astar}(s_1)=0,
\label{eq:abstract-F}
\end{equation}
The numerator of $F$ is a high order polynomial in $\tau$ and a low
order polynomial in each component of $\nu$.

The corresponding count ratio is
\begin{equation}
    r={B_{\astar}(s)\over B_{\abar}(s)}.
\label{eq:abstract-ratio}
\end{equation}
Choosing initial counts in this ratio makes the two action coordinates have matching first-order full-cycle learning rates:
\begin{equation*}
    {B_{\astar}(s)\over N_0(s,\astar)}=
    {B_{\abar}(s)\over N_0(s,\abar)}.
\end{equation*}

\subsection{Phase-sign margin LP}
\label{sec:sign-margin-lp}

The balance equations center the deterministic cycle on the boundary, but they do not by themselves ensure that the intended greedy policies occur in the right order.  Let
\begin{equation*}
    \sigma_j(s)=
    \begin{cases}
    +1, & s\in S_j,\\
    -1, & s\notin S_j.
    \end{cases}
\end{equation*}
At the boundary level $\qhat$, define the scaled expected gap increment in phase $j$ by
\begin{equation}
    g_j(s)=
    \mu_j(s,\abar)\bigl(q_j(s,\abar)-\qhat\bigr)
    -{\mu_j(s,\astar)\over r}\bigl(q_j(s,\astar)-\qhat\bigr).
\label{eq:gap-increment}
\end{equation}
Let
\begin{equation*}
    C_k(s)=\sum_{j<k}g_j(s)
\end{equation*}
be the cumulative gap displacement before phase $k$.  The scaled
initial gap $d_0(s)$ is then chosen by
\begin{equation}
\begin{aligned}
\text{maximize}\quad & \rho,\\
\text{over}\quad & d_0\in\mathbb R^{13},\ \rho\in\mathbb R,\\
\text{subject to}\quad
&\sigma_k(s)\bigl(d_0(s)+C_k(s)\bigr)\ge \rho,
\qquad k=1,\ldots,26,
\quad s=s_1,\ldots,s_{13}.
\end{aligned}
\label{eq:sign-lp}
\end{equation}
A positive optimal value $\rho$ means that the phases are followed in
the intended order.

The initial points are then placed symmetrically around the boundary:
\begin{equation*}
    q_0(s,\astar)=\qhat-{d_0(s)\over 2N_0(s,\abar)},
    \qquad
    q_0(s,\abar)=\qhat+{d_0(s)\over 2N_0(s,\abar)}.
  \end{equation*}
The factor $1/2$ splits the desired total action gap equally above and below $\widehat q$.

\subsection{Rotational start search and selected start law}
\label{sec:start-search}
The initial search for a suitable start distribution $\nu$ was over a
wide class of rotationally symmetric distributions. Distributions with
small support were preferred, as these would avoid adding too much
extra noise into the full simulations. We started by
considering all $13^2$ pairs of deterministic starts. For each, we solved
\eqref{eq:abstract-F} for $\tau$, and for any roots in $(0,1]$ we
then solved  \eqref{eq:sign-lp} for $\rho$. There were no solutions
with a positive $\rho$.

We then broadened the search to distributions with a two point support
in the two-"stay" phases. That is, 
for offsets $o_1,o_2,o_3$ we set
\begin{align}
    \nu_{\{s_i\}}^{(o_1)}
    &=\delta_{(s_{i+o_1},\abar)},\\
    \nu_{\{s_i,s_{i-1}\}}^{(o_2,o_3,\lambda)}
    &=\lambda\delta_{(s_{i+o_2},\abar)}+(1-
    \lambda)\delta_{(s_{i+o_3},\astar)}.
\end{align}
For a fixed triple and trial $\tau$, equation~\eqref{eq:abstract-F} is
solved for a root $\lambda\in[0,1]$ and then the sign-margin
LP~\eqref{eq:sign-lp} is evaluated. The $\tau$ which gives the maximum margin $\rho$ is then
found by gridding. The offsets
$    (o_1,o_2,o_3)=(-6,+1,-1)$ were chosen as giving a good margin
with the corresponding $\lambda$ close to 1.

Thus
\begin{align}
\nu_{\{s_i\}}&=\delta_{(s_{i-6},\bar a)},\\
\nu_{\{s_i,s_{i-1}\}}&=\lambda\delta_{(s_{i+1},\bar a)}+(1-
\lambda)\delta_{(s_{i-1},a_*)}.
\end{align}
and the corresponding $\tau$ and $\lambda$ are given by
\begin{equation*}
    \tau=0.512333524979486,
    \qquad
    \lambda=0.923714016854384.
\end{equation*}
The resulting boundary quantities are
\begin{equation*}
    \qhat=1.971056244749326
    \qquad
    q_*={1\over 1-
    \gamma\tau}=2.050366396036596
    \qquad
    r=2.010039006519156.
\end{equation*}
The sign-margin LP gives
\begin{equation*}
    \rho=8.527647265071263\times10^{-3}
  \end{equation*}
There were no  distributions with a two point support
in the single-"stay" phases and a one point support in the two-"stay" phases which admitted a positive margin.

Note that this procedure is not specific to the 13 state MDP, it in fact gives a suitable three point start distribution for the corresponding cycle on examples with 5 states and above, although a larger number of states gives a larger margin.

Figure~\ref{fig:spiral} in the main text shows the resulting
deterministic cycle. Note that the corners of the cycle contract
linearly towards $(\widehat q,\widehat q)$, and so stay in the right greedy region, thus the cycles will persist indefinitely. The next section confirms this contraction.

\subsection{Full-cycle linear interpolation toward $\widehat q$}
\label{sec:linear-interpolation}

Note that the full-cycle balance equations also imply a geometric contraction.  Suppose counts are exactly proportional to the full-cycle masses,
\begin{equation*}
    N_0(s,\astar)=c_sB_{\astar}(s),
    \qquad
    N_0(s,\abar)=c_sB_{\abar}(s).
\end{equation*}
Then after $m$ complete normal cycles,
\begin{equation*}
    N_{26m}(s,a)=(c_s+m)B_a(s).
\end{equation*}
Using $A_a(s)=\qhat B_a(s)$, the next full-cycle endpoint is
\begin{align}
q_{26(m+1)}(s,a)
&={ (c_s+m)B_a(s)q_{26m}(s,a)+A_a(s) \over (c_s+m+1)B_a(s)}\\
&={c_s+m\over c_s+m+1}q_{26m}(s,a)+{1\over c_s+m+1}\qhat.
\end{align}
Therefore the two-action point
\begin{equation*}
    P_m(s)=\bigl(q_{26m}(s,\astar),q_{26m}(s,\abar)\bigr)
\end{equation*}
satisfies
\begin{equation}
    P_{m+1}(s)=(1-\alpha_m)P_m(s)+\alpha_m(\qhat,\qhat),
    \qquad
    \alpha_m={1\over c_s+m+1}.
\label{eq:linear-interpolation}
\end{equation}
Equivalently,
\begin{equation*}
    P_m(s)-(\qhat,\qhat)=
    {c_s\over c_s+m}\bigl(P_0(s)-(\qhat,\qhat)\bigr).
\end{equation*}
Thus the full-cycle endpoints move on the line segment toward the boundary point. The intermediate phase points need not lie precisely on this line, but deviate only by order $1/k$.
\subsection{First-visit probabilities}

Fix an anchor state $s_i$ and write $d=(m-i)\bmod 13$, so $s_m=s_{i+d}$.  Let $\mu_1^\varepsilon$ and $\mu_2^\varepsilon$ denote the perturbed first-visit probabilities in the one-"stay" and two-"stay" phases for the selected start law. The values for the current $\tau$ and $\lambda$ are given in the following table for convenience. 
\begin{table}[H]
\centering
\small
\caption{Single-phase first-visit probabilities by relative distance $d$ for the tuned $n=13$ start law.}
\vspace{0.3cm}
\label{tab:first-visit-probabilities}
\begin{filecontents*}{figures/first-visit-probabilities.csv}
d,mu1_move,mu1_stay,mu2_move,mu2_stay
0,3.846153846153847e-06,1.809503953663139e-02,3.846153846153847e-06,3.908577493225771e-02
1,3.531135969193039e-02,3.846153846153847e-06,7.628220070114757e-02,6.015469413686563e-04
2,6.890758785627323e-02,3.846153846153847e-06,1.166624392862900e-03,3.846153846153847e-06
3,1.344825044414958e-01,3.846153846153847e-06,2.262065683125063e-03,3.846153846153847e-06
4,2.624751369514378e-01,3.846153846153847e-06,4.400206633994957e-03,3.846153846153847e-06
5,5.122980087126930e-01,3.846153846153847e-06,8.573544599640492e-03,3.846153846153847e-06
6,9.999156631913028e-01,9.999038461538462e-01,1.671928904572693e-02,3.846153846153847e-06
7,1.555799732373936e-05,3.846153846153847e-06,3.261858910892813e-02,3.846153846153847e-06
8,1.535267408422398e-05,3.846153846153847e-06,6.365169408451568e-02,3.846153846153847e-06
9,1.495191319409171e-05,3.846153846153847e-06,1.242237696222821e-01,3.846153846153847e-06
10,1.416968663542893e-05,3.846153846153847e-06,2.424515891665755e-01,3.846153846153847e-06
11,1.264289496452645e-05,3.846153846153847e-06,4.732149762570987e-01,3.846153846153847e-06
12,9.662821249921100e-06,3.846153846153847e-06,9.236313082739486e-01,9.236254916065448e-01
\end{filecontents*}
\pgfplotstabletypeset[
  col sep=comma,
  columns/d/.style={int detect,column name={$d$}},
  columns/{mu1_move}/.style={sci,sci zerofill,precision=6,column name={$\mu_1^\varepsilon(d,\textup{"move"})$}},
  columns/{mu1_stay}/.style={sci,sci zerofill,precision=6,column name={$\mu_1^\varepsilon(d,\textup{"stay"})$}},
  columns/{mu2_move}/.style={sci,sci zerofill,precision=6,column name={$\mu_2^\varepsilon(d,\textup{"move"})$}},
  columns/{mu2_stay}/.style={sci,sci zerofill,precision=6,column name={$\mu_2^\varepsilon(d,\textup{"stay"})$}},
  every head row/.style={before row=\toprule,after row=\midrule},
  every last row/.style={after row=\bottomrule}
]{figures/first-visit-probabilities.csv}
\end{table}

\subsection{Sampled recovery strategy}
\label{sec:recovery-strategy}

The balance equations and margin LP describe the intended normal
cycle. In the sampled first-visit MC run, however, return noise,
trajectory noise, and rare exploring starts can move the greedy policy off the intended 26-phase cycle. The simulation therefore has to choose start distributions even when the observed greedy "stay"-set is not one of the intended cycle phases.

Let
\begin{equation*}
    S=\{s:q(s,\bar a)>q(s,a_*)\}
\end{equation*}
denote the set of states for which the "stay" action is greedy at the start of an update. If $S$ is one of the intended cycle phases, the controller uses the normal start law from Section~\ref{sec:start-search}.
If $S$ is off-cycle, the rollout policy is the current greedy policy with "stay" set $S$, and the controller chooses from the $26$ state-action starts as follows:

Let the current action gap be
\begin{equation*}
    G(s)=q(s,\bar a)-q(s,a_*).
\end{equation*}
If $S$ is empty then all $G(s)$ are negative, in this case we choose $s_0=\arg\max_s G(s)$ as the state whose action values are nearest the "move"/"stay" boundary. For other non-cycle phases then for $s_i\in S$ we let $m(s_i)= {\mathbf 1}(s_{i-1}\in M)+ {\mathbf 1}(s_{i+1}\in M)$ denote the number of neighbours of $s_i$ in $S$ and the $S_{\text{min}}=\{s\in S:m(s)=\min m\}$. That is, $S_{\text{min}}$ is the subset of $S$ consisting of the states with the minimum number of neighbours (all operations are modulo 13 and so $s_{1}$ and $s_{13}$ are considered neighbours). Finally we let $s_0=\arg\min_{s\in S_{\text{min}}} G(s)$
be the "stay" state with the fewest number of "stay" neighbours whose action values are closest to the "move"/"stay" boundary. This guarantees, for example, that if $M$ consists of an adjacent pair and one or more isolated states then one of the isolated states is chosen, and in a run of adjacent states then the two ends are chosen in preference to an internal state. Finally, the starting pair is chosen as the "stay" action in this state $(s_0,\bar a)$, in an attempt to move the action value across the boundary. If successful this would create a "stay" state if $S$ is empty or reduce a "stay" state, moving $S$ closer to to a cycle phase, otherwise. If at the next step $S$ is still not a cycle phase then the process is repeated.      

For the simulations we present, approximately $83.6\%$ of updates are
normal on-cycle updates, $14.9\%$ are attempted recoveries from the
optimal all-"move" policy and the remaining $1.5\%$ are recoveries from
all other non-cycle policies.  

\textit{ChapGPT 5.5 Pro was used to help refine and help code this example. The core simulation was hand-coded.}



\end{document}